\newtheorem{definition}{Definition}
\newtheorem{theorem}{Theorem}
\def\eqref#1{equation~\ref{#1}}
\def\1{\bm{1}}
\DeclareMathAlphabet{\mathsfit}{\encodingdefault}{\sfdefault}{m}{sl}
\SetMathAlphabet{\mathsfit}{bold}{\encodingdefault}{\sfdefault}{bx}{n}
\title{Dataset Pruning: Reducing Training Data by Examining Generalization Influence}
\author{%
    Shuo Yang$^{1}$\quad
    Zeke Xie$^2$\quad
    Hanyu Peng$^2$\quad
    Min Xu$^1$\quad
    \textbf{Mingming Sun}$^2$\quad
    \textbf{Ping Li}$^2$\\
    $^1$School of Electrical and Data Engineering, University of Technology Sydney\\
    $^2$Cognitive Computing Lab, Baidu Research
}
\begin{document}

\maketitle

\begin{abstract}
The great success of deep learning heavily relies on increasingly larger training data, which comes at a price of huge computational and infrastructural costs. This poses crucial questions that, do all training data contribute to model's performance? How much does each individual training sample or a sub-training-set affect the model's generalization, and how to construct the \textit{smallest} subset from the entire training data as a proxy training set without significantly sacrificing the model's performance? To answer these, we propose \textit{dataset pruning}, an optimization-based sample selection method that can (1) examine the influence of removing a particular set of training samples on model's generalization ability with theoretical guarantee, and (2) construct the \textit{smallest subset} of training data that yields strictly constrained generalization gap. The empirically observed generalization gap of dataset pruning is substantially consistent with our theoretical expectations. Furthermore, the proposed method prunes 40\% training examples on the CIFAR-10 dataset, halves the convergence time with only 1.3\% test accuracy decrease, which is superior to previous score-based sample selection methods.

\end{abstract}

\section{Introduction}

The great advances in deep learning over the past decades have been powered by ever-bigger models crunching ever-bigger amounts of data. However, this success comes at a price of huge computational and infrastructural costs for network training, network inference, hyper-parameter tuning, and model architecture search. While lots of research efforts seek to reduce the network inference cost by pruning redundant parameters~\cite{networkpruningblalock2020state,networkpruningliu2018rethinking,networkpruningmolchanov2019importance}, scant attention has been paid to the \textit{data redundant} problem, which is crucial for network training and parameter tuning efficiency.

Investigating the data redundant problem not only helps to improve the training efficiency, but also helps us understand the representation ability of small data and how many training samples is required and sufficient for a learning system. Previous literatures on subset selection try to sort and select a fraction of training data according to a \textit{scalar score} computed based on some criterions, such as the distance to class center~\cite{welling2009herding,rebuffi2017icarl,castro2018end,belouadah2020scail}, the distance to other selected examples~\cite{wolf2011kcenter,sener2018active}, the forgetting score~\cite{toneva2018empirical}, and the gradient norm~\cite{paul2021deep}. However, these methods are (a) \textit{heuristic} and lack of theoretically guaranteed generalization for the selected data, also (b) discard the \textit{joint effect}, \textit{i.e.,} the norm of \textit{averaged} gradient vector of two \textit{high-gradient-norm} samples could be zero if the direction of these two samples' gradient is opposite.

To go beyond these limitations, a natural ask is, how much does \textit{a combination of particular training examples} contribute to the model's generalization ability? However, simply evaluating the test performance drop caused by removing each possible subset is not acceptable. Because it requires to re-train the model for $2^n$ times given a dataset with size $n$. Therefore, the key challenges of dataset pruning are: (1) \textit{how to efficiently estimate the generalization influence of all possible subsets without iteratively re-training the model}, and (2) \textit{how to identify the smallest subset of the original training data with strictly constrained generalization gap.} 

In this work, we present an optimization-based dataset pruning method that can identify the \textit{largest} redundant subset from the entire training dataset with (a) theoretically guaranteed generalization gap and (b) consideration of the \textit{joint influence} of all collected data. Specifically, we define the \textit{parameter influence} of a training example as the model's parameter change caused by omitting the example from training. The parameter influence can be linearly approximated without re-training the model by Influence Function~\cite{influencefunction}. Then, we formulate the subset selection as a \textit{constrained discrete optimization} problem with an objective of maximizing the number of collected samples, and a constraint of penalizing the network parameter change caused by removing the collected subset within a given threshold $\epsilon$. We then conduct extensive theoretical and empirical analysis to show that the generalization gap of the proposed dataset pruning can be upper bounded by the pre-defined threshold $\epsilon$. Superior to previous score-based sample selection methods, our proposed method prunes 40\% training examples in CIFAR-10~\cite{cifar10}, halves the convergence time, and achieves only 1.3\% test accuracy drop. Before delving into details, we summarize our contributions as below:

\begin{itemize}[leftmargin=15pt]
    \item This paper proposes a \textit{dataset pruning} problem, which extends the sample selection problem from \textit{cardinality} constraint to \textit{generalization} constraint. Specifically, previous sample selection methods find a data subset with \textit{fixed size budget} and try to improve their performance, while dataset pruning tries to identify the \textit{smallest subset} that satisfies the expected generalization ability.
    
    \item This paper proposes to leverage the influence function to approximate the network parameter change caused by omitting each individual training example. Then an optimization-based method is proposed to identify the largest subset that satisfies the expected parameter change. 
    
    \item We prove that the generalization gap caused by removing the identified subset can be up-bounded by the network parameter change that was strictly constrained during the dataset pruning procedure. The observed empirical result is substantially consistent with our theoretical expectation.
    
    \item The experimental results on dataset pruning and neural architecture search (NAS) demonstrate that the proposed dataset pruning method is extremely effective on improving the network training and architecture search efficiency. 
\end{itemize}

The rest of this paper is organized as follows. In Section.~\ref{sec:related_works}, we briefly review existing sample selection and dataset condensation research, where we will discuss the major differences between our proposed dataset pruning and previous methods. In Section.~\ref{sec:problem_def}, we present the formal definition of the dataset pruning problem. In Section.~\ref{sec:method} and Section.~\ref{sec:theory}, our optimization-based dataset pruning method and its generalization bound are introduced. In Section.~\ref{sec:exp}, we conduct extensive experiments to verify the validity of the theoretical result and the effectiveness of dataset pruning. Finally, in Section.~\ref{sec:conclusion}, we conclude our paper.

\section{Related Works}\label{sec:related_works}
Dataset pruning is orthogonal to few-shot learning~\cite{maml,matching,snell2017prototypical,closer,DeepEMD,yang2021free,yang2021bridging}. Few-shot learning aims at improving the performance given limited training data, while dataset pruning aims at reducing the training data without hurting the performance much.

Dataset pruning is closely related to the data selection methods, which try to identify the most representative training samples. Classical data selection methods~\cite{agarwal2004approximating,har2004coresets,feldman2013turning} focus on clustering problems. Recently, more and more data selection methods have been proposed in continual learning~\cite{rebuffi2017icarl,toneva2019empirical,castro2018end,aljundi2019gradient} and active learning~\cite{sener2017active} to identify which example needs to be stored or labeled. The data selection methods typically rely on a pre-defined criterion to compute a \textit{scalar score} for each training example, \textit{e.g.} the compactness~\cite{rebuffi2017icarl,castro2018end}, diversity~\cite{sener2018active,aljundi2019gradient}, forgetfulness~\cite{toneva2019empirical}, or the gradient norm~\cite{paul2021deep},
then rank and select the training data according to the computed score. However, these methods are heuristic and lack of generalization guarantee, they also discard the influence interaction between the collected samples. Our proposed dataset pruning method overcomes these shortcomings.

Another line of works on reducing the training data is dataset distillation~\cite{wang2018dataset,such2020generative,sucholutsky2019softlabel,bohdal2020flexible,nguyen2021dataset1,nguyen2021dataset2,cazenavette2022dataset} or dataset condensation~\cite{zhao2021DC,zhao2021DSA,zhao2021distribution,wang2022cafe,jin2021graph,jin2022condensing}. This series of works focus on \textit{synthesizing} a small but informative dataset as an alternative to the original large dataset. For example, Dataset Distillation~\cite{wang2018dataset} try to learn the synthetic dataset by directly minimizing the classification loss on the real training data of the neural networks trained on the synthetic data. \cite{cazenavette2022dataset} proposed to learn the synthetic images by matching the training trajectories.~\cite{zhao2021DC,zhao2021DSA,zhao2021distribution} proposed to learn the synthetic images by matching the gradients and features. However, due to the computational power limitation, these methods usually only synthesize an extremely small number of examples (\textit{e.g.} 50 images per class) and the performance is far from satisfactory. Therefore, the performances of dataset distillation and dataset pruning are not directly comparable.

Our method is inspired by Influence Function~\cite{influencefunction} in statistical machine learning. Removing a training example from the training dataset and not damage the generalization indicates that the example has a small influence on the expected test loss. Earlier works focused on studying the influence of removing training points from linear models, and later works extended this to more general models~\cite{hampel2011robust,cook1986assessment,thomas1990assessing,chatterjee1986influential,wei1998generalized}.~\cite{liu2014efficient} used influence functions to study model robustness and to fo fast cross-validation in kernel methods.~\cite{kabra2015understanding} defined a different notion of influence that is specialized to finite hypothesis classes.~\cite{influencefunction} studied the influence of weighting an example on model's parameters. \cite{borsos2020coresets} 
proposes to construct a coreset by learning a per-sample weight score, the proposed selection rule eventually converge to the formulation of influence function~\cite{koh2017understanding}. But different with our work,  \cite{borsos2020coresets} did not directly leverage the influence function to select examples and cannot guarantee the generalization of the selected examples.

\section{Problem Definition}\label{sec:problem_def}

Given a large-scale dataset $\mathcal{D} = \{ z_1, \dots, z_n \}$ containing $n$ training points where $z_i =( x_i, y_i) \in \mathcal{X} \times \mathcal{Y}$, $\mathcal{X}$ is the input space and $\mathcal{Y}$ is the label space. The goal of dataset pruning is to identify a set of redundant training samples from $\mathcal{D}$ as many as possible and remove them to reduce the training cost. The identified redundant subset, $\hat{\mathcal{D}} = \{ \hat{z}_1, \dots, \hat{z}_m \}$ and $\hat{\mathcal{D}} \subset {\mathcal{D}}$, should have a minimal impact on the learned model, \textit{i.e.} the test performances of the models learned on the training sets before and after pruning should be very close, as described below:
\begin{equation}\label{eq:generalization_goal}
    \mathbb{E}_{z \sim P(\mathcal{D})}L(z, \hat{\theta} ) \simeq \mathbb{E}_{z \sim P(\mathcal{D})}L(z, \hat{\theta}_{-\hat{\mathcal{D}}})
\end{equation}
where $P(\mathcal{D})$ is the data distribution, $L$ is the loss function, and $\hat{\theta}$ and $\hat{\theta}_{-\hat{\mathcal{D}}}$ are the empirical risk minimizers on the training set $\mathcal{D}$ before and after pruning $\hat{\mathcal{D}}$, respectively, \textit{i.e.}, $\hat{\theta} = \arg \min _{\theta \in \Theta} \frac{1}{n} \sum_{z_i \in \mathcal{D}}L(z_i, \theta )$ and $\hat{\theta}_{-\hat{\mathcal{D}}} = \arg \min _{\theta \in \Theta} \frac{1}{n-m} \sum_{z_i \in \mathcal{D} \setminus \hat{\mathcal{D}}} L(z_i, \theta)$. Considering the neural network is a locally smooth function~\cite{rifai2012generative,goodfellow2014explaining,zhao2021DC}, similar weights ($\theta \approx \hat{\theta}$) imply similar mappings in a local neighborhood and thus generalization performance. Therefore, we can achieve Eq.~\ref{eq:generalization_goal} by obtaining a $\hat{\theta}_{-\hat{\mathcal{D}}}$ that is very close to $\hat{\theta}$ (the distance between $\hat{\theta}_{-\hat{\mathcal{D}}}$ and $\hat{\theta}$ is smaller than a given very small value $\epsilon$). To this end, we first define the dataset pruning problem on the perspective of model parameter change, we will later provide the theoretical evidence that the generalization gap in Eq.~\ref{eq:generalization_goal} can be upper bounded by the parameter change in Section.~\ref{sec:theory}.  

\begin{definition}[$\epsilon$-redundant subset.]\label{def}
Given a dataset $\mathcal{D} = \{ z_1, \dots, z_n \}$ containing $n$ training points where $z_i =( x_i, y_i) \in \mathcal{X} \times \mathcal{Y}$, considering $\hat{\mathcal{D}}$ is a subset of $\mathcal{D}$ where $\hat{\mathcal{D}} = \{ \hat{z}_1, \dots, \hat{z}_m \}$, $\hat{\mathcal{D}} \subset \mathcal{D}$. We say $\hat{\mathcal{D}}$ is an $\epsilon$-redundant subset of $\mathcal{D}$ if $\left\| \hat{\theta}_{-\hat{\mathcal{D}}} - \hat{\theta} \right\|_2 \leq \epsilon$, where $\hat{\theta}_{-\hat{\mathcal{D}}} = \arg \min _{\theta \in \Theta} \frac{1}{n-m} \sum_{z_i \in \mathcal{D} \setminus \hat{\mathcal{D}}} L(z_i, \theta)$ and $\hat{\theta} = \arg \min _{\theta \in \Theta} \frac{1}{n} \sum_{z_i \in \mathcal{D}}L(z_i, \theta)$, then write $\hat{\mathcal{D}}$ as $\hat{\mathcal{D}}_\epsilon$.

\end{definition}

\noindent \textbf{Dataset Pruning:} Given a dataset $\mathcal{D}$, dataset pruning aims at finding its \textit{largest} $\epsilon$-redundant subset $\hat{\mathcal{D}}_\epsilon^{max}$, \textit{i.e.}, $\forall \hat{\mathcal{D}}_\epsilon \subset \mathcal{D}, |\hat{\mathcal{D}}_\epsilon^{max}| \geq |\hat{\mathcal{D}}_\epsilon|$, so that the pruned dataset can be constructed as $\mathcal{\tilde{D}} = \mathcal{D} \setminus \hat{\mathcal{D}}_\epsilon^{max}$. 

\section{Method}\label{sec:method}
To achieve the goal of dataset pruning, we need to evaluate the model's parameter change $\left\|\hat{\theta}_{-\hat{\mathcal{D}}} - \hat{\theta}\right\|_2$ caused by removing each possible subset of $\mathcal{D}$. However, it is impossible to re-train the model for $2^n$ times to obtain $\hat{\mathcal{D}}_\epsilon^{max}$ for a given dataset $\mathcal{D}$ with size $n$. In this section, we propose to efficiently approximate the $\hat{\mathcal{D}}_\epsilon^{max}$ without the need to re-train the model.
\subsection{Parameter Influence Estimation}

We start from studying the model parameter change of removing each single training sample $z$ from the training set $\mathcal{D}$. The change can be formally written as $\hat{\theta}_{-z} - \hat{\theta}$, where $\hat{\theta}_{-z} = \arg \min _{\theta \in \Theta} \frac{1}{n-1} \sum_{z_i \in \mathcal{D}, z_i \neq z} L(z_i, \theta)$. Estimating the parameter change for each training example by re-training the model for $n$ times is also unacceptable time-consuming, because $n$ is usually on the order of tens or even hundreds of thousands.

Alternatively, the researches of Influence Function~\cite{cook1977detection,cook1980characterizations,cook1986assessment,cook1982residuals,influencefunction} provide us an accurate and fast estimation of the parameter change caused by weighting an example $z$ for training. Considering a training example $z$ was weighted by a small $\delta$ during training, the empirical risk minimizer can be written as $\hat{\theta}_{\delta,z} = \arg \min _{\theta \in \Theta} \frac{1}{n} \sum_{z_i \in \mathcal{D}}L\left(z_i, \theta\right) + \delta L\left(z, \theta\right)$. Assigning $-\frac{1}{n}$ to $\delta$ is equivalent to removing the training example $z$. Then, the influence of weighting $z$ on the parameters is given by
\begin{equation}\label{eq:influence_func}
    \mathcal{I}_{\mathrm{param}}(z) = \frac{\mathrm{d}\hat{\theta}_{\delta,z}}{\mathrm{d}\delta}\bigg|_{\delta=0} = -H_{\hat{\theta}}^{-1}\nabla_{\theta}L(z,\hat{\theta})
\end{equation}
where $H_{\hat{\theta}} = \frac{1}{n}\sum_{z_i \in \mathcal{D}}\nabla^{2}_{\theta}L(z_i,\hat{\theta})$ is the Hessian and positive definite by assumption, $\mathcal{I}_{\mathrm{param}}(z) \in \mathbb{R}^N$, $N$ is the number of network parameters. The proof of Eq.\eqref{eq:influence_func} can be found in~\cite{influencefunction}. Then, we can linearly approximate the parameter change due to removing $z$ without retraining the model by computing $\hat{\theta}_{-z} - \hat{\theta} \approx -\frac{1}{n}\mathcal{I}_{\mathrm{param}}(z) = \frac{1}{n}H_{\hat{\theta}}^{-1}\nabla_{\theta}L(z,\hat{\theta})$. Similarly, we can approximate the parameter change caused by removing a subset $\hat{\mathcal{D}}$ by accumulating the parameter change of removing each example, $\hat{\theta}_{-\hat{\mathcal{D}}} - \hat{\theta} \approx \sum_{z_i \in \hat{\mathcal{D}}} -\frac{1}{n}\mathcal{I}_{\mathrm{param}}(z_i) = \sum_{z_i \in \hat{\mathcal{D}}} \frac{1}{n}H_{\hat{\theta}}^{-1}\nabla_{\theta}L(z_i,\hat{\theta})$. The accumulated individual influence can accurately reflect the influence of removing a group of data, as proved in~\cite{koh2019accuracy}.

\subsection{Dataset Pruning as Discrete Optimization}
Combining definition~\ref{def} and the parameter influence function (Eq.~\eqref{eq:influence_func}), it is easy to derive that if the parameter influence of a subset $\hat{\mathcal{D}}$ satisfies $\left\|\sum_{z_i \in \hat{\mathcal{D}}} -\frac{1}{n}\mathcal{I}_{\mathrm{param}}(z_i)\right\|_2 \leq \epsilon$, then it is an $\epsilon$-redundant subset of ${\mathcal{D}}$. Denote $\mathbb{S} = \{ -\frac{1}{n}\mathcal{I}_{\mathrm{param}}(z_1), \cdots, -\frac{1}{n}\mathcal{I}_{\mathrm{param}}(z_n)\}$, to find the largest $\epsilon$-redundant subset $\hat{\mathcal{D}}_\epsilon^{max}$ that satisfies the conditions of (1) $\left\|\sum_{z_i \in \hat{\mathcal{D}}_\epsilon^{max}} -\frac{1}{n}\mathcal{I}(z_i)\right\|_2 \leq \epsilon$ and (2) $\forall \hat{\mathcal{D}}_\epsilon \subset \mathcal{D}, |\hat{\mathcal{D}}_\epsilon^{max}| \geq |\hat{\mathcal{D}}_\epsilon|$ simultaneously, we formulate the generalization-guaranteed dataset pruning as a \textit{discrete optimization} problem as below (a):

\begin{minipage}{.5\textwidth}
\noindent \textbf{(a) generalization-guaranteed pruning:}
\begin{equation}\label{eq:epsilon_guided}
\begin{aligned}
& \underset{W}{\text{maximize}}
& & \sum_{i=1}^{n} W_i \\
& \text{subject to}
& & \left\|W^T \mathbb{S}\right\|_2 \leq \epsilon \\
&&& W \in \{0,1\}^n \\
\end{aligned}
\end{equation}
\end{minipage}
\begin{minipage}{.5\textwidth}
\noindent \textbf{(b) cardinality-guaranteed pruning:}
\begin{equation}\label{eq:m_guided}
\begin{aligned}
& \underset{W}{\text{minimize}}
& & \left\|W^T \mathbb{S}\right\|_2 \\
& \text{subject to}
& & \sum_{i=1}^{n} W_i = m \\
&&& W \in \{0,1\}^n \\
\end{aligned}
\end{equation}
\end{minipage}

where $W$ is a discrete variable that is needed to be optimized. For each dimension $i$, $W_i = 1$ indicates the training sample $z_i$ is selected into $\hat{\mathcal{D}}_\epsilon^{max}$, while $W_i = 0$ indicates the training sample $z_i$ is not pruned. After solving $W$ in Eq~.\ref{eq:epsilon_guided}, the largest $\epsilon$-redundant subset can be constructed as $\hat{\mathcal{D}}_\epsilon^{max} = \{z_i | \forall z_i \in \mathcal{D}, W_i = 1\}$. For some scenarios when we need to specify the removed subset size $m$ and want to minimize their influence on parameter change, we provide cardinality-guaranteed dataset pruning in Eq.~\ref{eq:m_guided}.





\begin{algorithm}[t!]
\caption{Generalization guaranteed dataset pruning.}
\label{alg}
\begin{algorithmic}[1]
{
\REQUIRE Dataset $\mathcal{D} = \{z_1,\dots,z_n\}$
\REQUIRE Random initialized network $\theta$;
\REQUIRE Expected generalization drop $\epsilon$;
\STATE $\hat{\theta} = \arg \min _{\theta \in \Theta} \frac{1}{n} \sum_{z_i \in \mathcal{D}}L(z_i, \theta )$;
\textcolor{blue}{\hfill//compute ERM on $\mathcal{D}$}
\STATE Initialize $\mathbb{S} = \phi$;
\FOR{$i=1,2,\ldots,n$} 
\STATE $\mathbb{S}_i = -\frac{1}{n}\mathcal{I}_{\mathrm{param}}(z_i) = \frac{1}{n}H_{\hat{\theta}}^{-1}\nabla_{\theta}L(z_i,\hat{\theta})$;\textcolor{blue}{\hfill//store the parameter influence of each example}
\ENDFOR
\STATE Initialize $W \in \{0,1\}^n$;
\STATE Solve the following problem to get $W$:
\begin{equation}
\begin{aligned}
\hspace{40mm}& \underset{W}{\text{maximize}}
& & \sum_{i=1}^{n} W_i &\hfill\text{\textcolor{blue}{\hfill//maximize the subset size}}\\ 
& \text{subject to}
& & \left\|W^T \mathbb{S}\right\|_2 \leq \epsilon  &\text{\textcolor{blue}{\hfill//guarantee the generalization drop}}\\
&&& W \in \{0,1\}^n \\
\end{aligned}\nonumber
\end{equation}\nonumber
\STATE Construct the largest $\epsilon$-redundant subset $\hat{\mathcal{D}}_\epsilon^{max} = \{z_i | \forall z_i \in \mathcal{D}, W_i = 1\}$;
\RETURN Pruned dataset: $\mathcal{\tilde{D}} = \mathcal{D}\setminus\hat{\mathcal{D}}_\epsilon^{max}$;
}
\end{algorithmic}

\end{algorithm}

\section{Generalization Analysis}\label{sec:theory}
\label{sec:generalization}

In this section, we theoretically formulate the generalization guarantee of the minimizer given by the pruned dataset. Our theoretical analysis suggests that the proposed dataset pruning method have a relatively tight upper bound on the expected test loss, given a small enough $\epsilon$. 

For simplicity, we first assume that we prune only one training sample $z$ (namely, $\delta$ is one-dimensional). Following the classical results, we may write the influence function of the test loss as 
\begin{align}
\label{eq:influence_loss}
    \mathcal{I}_{\mathrm{loss}}(z_{\mathrm{test}}, z)  =  \frac{\mathrm{d}  L\left(z_{\mathrm{test}}, \hat{\theta}_{z, \delta}\right)  }{ \mathrm{d} \delta } \bigg|_{\delta=0}  =  \nabla_{\theta} L(z_{\mathrm{test}}, \hat{\theta})^{\top} \frac{\mathrm{d}\hat{\theta}_{\delta,z}}{\mathrm{d}\delta}\bigg|_{\delta=0}   =   \nabla_{\theta} L(z_{\mathrm{test}}, \hat{\theta})^{\top} \mathcal{I}_{\mathrm{param}}(z)
\end{align}
which indicates the first-order derivative of the test loss $L\left(z_{\mathrm{test}}, \hat{\theta}_{z, \delta} \right)$ with respect to $\delta$.

We may easily generalize the theoretical analysis to the case that prunes $m$ training samples, if we let $\delta$ be a $m$-dimensional vector $(-\frac{1}{n}, \dots,-\frac{1}{n})$ and $\hat{z} \in \hat{\mathcal{D}}$. Then we may write the multi-dimensional form of the influence function of the test loss as 
\begin{align}
\label{eq:md_influence_loss}
    \mathcal{I}_{\mathrm{loss}}(z_{\mathrm{test}}, \hat{\mathcal{D}}) & =  \nabla_{\theta} L(z_{\mathrm{test}}, \hat{\theta})^{\top} \frac{\mathrm{d}\hat{\theta}_{\delta, \hat{z}}}{\mathrm{d}\delta}\bigg|_{\delta= (0,\dots,0)} =  (\mathcal{I}_{\mathrm{loss}}(z_{\mathrm{test}}, \hat{z}_{1}),  \dots,\mathcal{I}_{\mathrm{loss}}(z_{\mathrm{test}}, \hat{z}_{m})),
\end{align}
which is an $N \times m$ matrix and $N$ indicates the number of parameters. 

We define the expected test loss over the data distribution $P(\mathcal{D})$ as $\mathcal{L}(\theta)  = \mathbb{E}_{z_{\mathrm{test}} \sim P(\mathcal{D})}L\left(z_{\mathrm{test}}, \theta\right)$ and define the generalization gap due to dataset pruning as $|\mathcal{L}( \hat{\theta}_{-\hat{\mathcal{D}}} ) - \mathcal{L}(\hat{\theta} ) |$. By using the influence function of the test loss~\cite{singh2021phenomenology}, we obtain Theorem \ref{theorem:bound_loss} which formulates the upper bound of the generalization gap.

 \begin{theorem}[Generalization Gap of Dataset Pruning]
 \label{theorem:bound_loss}
 Suppose that the original dataset is $\mathcal{D}$ and the pruned dataset is $\hat{\mathcal{D}} = \{ \hat{z}_{i} \}_{i=1}^{m}$. If $\left\| \sum_{\hat{z}_{i} \in  \hat{\mathcal{D}}} \mathcal{I}_{\mathrm{param}}(\hat{z}_{i}) \right\|_2  \leq \epsilon$, we have the upper bound of the generalization gap as 
\begin{align}
\label{eq:bound_test_loss}
\sup | \mathcal{L}(\hat{\theta}_{ - \hat{\mathcal{D}} })  - \mathcal{L}(\hat{\theta}) | = \mathcal{O} ( \frac{\epsilon}{n} + \frac{m}{n^{2}} ) .
\end{align}
\end{theorem}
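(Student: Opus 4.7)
The strategy is a two-term Taylor expansion of the expected test loss around the empirical risk minimizer $\hat{\theta}$, combined with the first-order influence-function approximation of the parameter change. Viewing $\hat{\theta}_{\delta,\hat{\mathcal{D}}}$ as a smooth function of the weight vector $\delta\in\mathbb{R}^{m}$ and removing $\hat{\mathcal{D}}$ as setting $\delta=(-\tfrac{1}{n},\dots,-\tfrac{1}{n})$, the plan has three main steps: (i) control the parameter change by influence; (ii) plug this into a Taylor expansion of $\mathcal{L}$; (iii) collect first-order and remainder contributions into the claimed $\mathcal{O}(\epsilon/n+m/n^{2})$ bound.

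\textbf{Step 1 (parameter change).} Starting from Eq.~\eqref{eq:influence_func}, standard influence-function arguments give
\begin{equation*}
\hat{\theta}_{-\hat{\mathcal{D}}} - \hat{\theta} \;=\; -\frac{1}{n}\sum_{\hat{z}_{i}\in\hat{\mathcal{D}}} \mathcal{I}_{\mathrm{param}}(\hat{z}_{i}) \;+\; R_{\theta},
\end{equation*}
where the remainder $R_{\theta}$ comes from the second-order term in the Taylor expansion of $\delta\mapsto \hat{\theta}_{\delta,\hat{\mathcal{D}}}$ at $\delta=0$. Under the standard assumptions used to derive the influence function (strongly convex empirical risk in a neighborhood of $\hat{\theta}$, $C^{2}$ loss with bounded third derivatives), $\|R_{\theta}\|_{2}=\mathcal{O}(\|\delta\|_{2}^{2})=\mathcal{O}(m/n^{2})$. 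Combined with the hypothesis $\|\sum_{i}\mathcal{I}_{\mathrm{param}}(\hat{z}_{i})\|_{2}\leq\epsilon$, this yields $\|\hat{\theta}_{-\hat{\mathcal{D}}}-\hat{\theta}\|_{2}\leq \epsilon/n + \mathcal{O}(m/n^{2})$.

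\textbf{Step 2 (transport to test loss).} Expanding $\mathcal{L}$ around $\hat{\theta}$ and using the chain-rule identity for $\mathcal{I}_{\mathrm{loss}}$ from Eq.~\eqref{eq:influence_loss}--\eqref{eq:md_influence_loss},
\begin{equation*}
\mathcal{L}(\hat{\theta}_{-\hat{\mathcal{D}}}) - \mathcal{L}(\hat{\theta}) \;=\; \nabla_{\theta}\mathcal{L}(\hat{\theta})^{\top}\bigl(\hat{\theta}_{-\hat{\mathcal{D}}}-\hat{\theta}\bigr) \;+\; \tfrac{1}{2}\bigl(\hat{\theta}_{-\hat{\mathcal{D}}}-\hat{\theta}\bigr)^{\top}\nabla_{\theta}^{2}\mathcal{L}(\tilde{\theta})\bigl(\hat{\theta}_{-\hat{\mathcal{D}}}-\hat{\theta}\bigr),
\end{equation*}
for some $\tilde{\theta}$ on the segment between $\hat{\theta}$ and $\hat{\theta}_{-\hat{\mathcal{D}}}$. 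Substituting Step 1, the linear term splits into a ``controlled'' piece $-\tfrac{1}{n}\nabla_{\theta}\mathcal{L}(\hat{\theta})^{\top}\sum_{i}\mathcal{I}_{\mathrm{param}}(\hat{z}_{i})$ and a contribution from $R_{\theta}$; the former is bounded by Cauchy--Schwarz as $\tfrac{1}{n}\|\nabla_{\theta}\mathcal{L}(\hat{\theta})\|_{2}\cdot\epsilon=\mathcal{O}(\epsilon/n)$ provided the expected test gradient is bounded, and the latter is $\mathcal{O}(m/n^{2})$. The quadratic term is bounded by $\tfrac{1}{2}\|\nabla^{2}\mathcal{L}\|_{\mathrm{op}}\|\hat{\theta}_{-\hat{\mathcal{D}}}-\hat{\theta}\|_{2}^{2}=\mathcal{O}(\epsilon^{2}/n^{2}+m^{2}/n^{4})$, which is strictly lower order than $\epsilon/n+m/n^{2}$ and therefore absorbed. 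Summing gives the claimed bound.

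\textbf{Anticipated obstacle.} The delicate point is not the Taylor bookkeeping but justifying the $\mathcal{O}(\|\delta\|_{2}^{2})$ remainder for the multi-point perturbation. The single-point influence expansion is classical, but for a group of $m$ simultaneously up-weighted points one needs a uniform bound on the Jacobian of $\delta\mapsto\hat{\theta}_{\delta,\hat{\mathcal{D}}}$ on a ball of radius $\sqrt{m}/n$, which in turn requires that $H_{\hat{\theta}}$ remain uniformly positive-definite after perturbation and that the per-sample Hessians be uniformly bounded. I would state these as standing smoothness/regularity assumptions on $L$ (the same ones implicitly used in Eq.~\eqref{eq:influence_func}) and then invoke the implicit function theorem on the perturbed optimality condition $\tfrac{1}{n}\sum_{i}\nabla_{\theta}L(z_{i},\theta)+\sum_{j}\delta_{j}\nabla_{\theta}L(\hat{z}_{j},\theta)=0$ to obtain the required $C^{2}$-dependence of $\hat{\theta}_{\delta,\hat{\mathcal{D}}}$ on $\delta$, from which the $\mathcal{O}(m/n^{2})$ remainder follows.
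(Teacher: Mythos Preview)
Your proposal is correct and follows essentially the same approach as the paper: Taylor-expand the test loss around $\hat{\theta}$, identify the first-order term with $-\tfrac{1}{n}\nabla_{\theta}\mathcal{L}(\hat{\theta})^{\top}\sum_{i}\mathcal{I}_{\mathrm{param}}(\hat{z}_{i})$, bound it by Cauchy--Schwarz and the hypothesis $\|\sum_{i}\mathcal{I}_{\mathrm{param}}(\hat{z}_{i})\|_{2}\leq\epsilon$, and absorb the second-order remainder as $\mathcal{O}(\|\delta\|_{2}^{2})=\mathcal{O}(m/n^{2})$. The only cosmetic difference is that the paper performs a single Taylor expansion of $L(z_{\mathrm{test}},\hat{\theta}_{\delta,\hat{\mathcal{D}}})$ directly in $\delta$, whereas you decompose it into a parameter-space expansion of $\mathcal{L}$ together with a separate influence-function expansion of $\hat{\theta}_{-\hat{\mathcal{D}}}-\hat{\theta}$; by the chain rule these are equivalent, and your version is in fact slightly more explicit about the regularity assumptions needed for the $\mathcal{O}(m/n^{2})$ remainder.
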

\begin{proof}
We express the test loss at $\hat{\theta}_{- \hat{\mathcal{D}}} $ using the first-order Taylor approximation as
\begin{align}
\label{eq:test_loss}
L\left(z_{\mathrm{test}}, f(\hat{\theta}_{- \hat{\mathcal{D}}})\right)  =  L\left(z_{\mathrm{test}}, f(\hat{\theta})\right) +  \mathcal{I}_{\mathrm{loss}}(z_{\mathrm{test}}, \hat{\mathcal{D}}) \delta + \mathcal{O}(\|\delta\|^{2})  ,
\end{align}
where the last term $\mathcal{O}(\|\delta\|^{2}) $ is usually ignorable in practice, because $\|\delta\|^{2} = \frac{m}{n^{2}}$ is very small for popular benchmark datasets. The same approximation is also used in related papers which used influence function for generalization analysis~\cite{singh2021phenomenology}. 
According to Equation \eqref{eq:test_loss}, we have 
\begin{align}
\label{eq:expected_test_loss}
\mathcal{L}(\hat{\theta}_{ - \hat{\mathcal{D}} })  - \mathcal{L}(\hat{\theta}) \approx &  \mathbb{E}_{z_{\mathrm{test}} \sim P(\mathcal{D})}[ \mathcal{I}_{\mathrm{loss}}(z_{\mathrm{test}}, \hat{\mathcal{D}})] \delta \\
= & \sum_{\hat{z}_{i} \in  \hat{\mathcal{D}}}  - \frac{1}{n} \mathbb{E}_{z_{\mathrm{test}} \sim P(\mathcal{D})}[ \mathcal{I}_{\mathrm{loss}}(z_{\mathrm{test}}, \hat{z}_{i})]  \\
= &  - \frac{1}{n} \mathbb{E}_{z_{\mathrm{test}} \sim P(\mathcal{D})}[ - \nabla_{\theta} L(z_{\mathrm{test}}, \hat{\theta})^{\top} ]  \left[\sum_{\hat{z}_{i} \in  \hat{\mathcal{D}}} H_{\hat{\theta}}^{-1}\nabla_{\theta}L(\hat{z}_{i},\hat{\theta}) \right] \\
  \leq & \frac{1}{n}  \| \nabla_{\theta} \mathcal{L}(\hat{\theta}) \|_2 \left\| \sum_{\hat{z}_{i} \in  \hat{\mathcal{D}}} \mathcal{I}_{\mathrm{param}}(\hat{z}_{i}) \right\|_2 \\
\leq & \frac{\epsilon}{n}   \| \nabla_{\theta} \mathcal{L}(\hat{\theta}) \|_2 ,
\end{align}
where the first inequality is the Cauchy–Schwarz inequality, the second inequality is based on the algorithmic guarantee that $\left\| \sum_{\hat{z}_{i} \in  \hat{\mathcal{D}}} \mathcal{I}_{\mathrm{param}}(\hat{z}_{i}) \right\|_2  \leq \epsilon$ in Eq.~\eqref{eq:epsilon_guided}, and $\epsilon$ is a hyperparameter. Given the second-order Taylor term, finally, we obtain the upper bound of the expected loss as
\begin{align}
\sup | \mathcal{L}(\hat{\theta}_{ - \hat{\mathcal{D}} })  - \mathcal{L}(\hat{\theta}) | = \mathcal{O} ( \frac{\epsilon}{n} + \frac{m}{n^{2}} ) .
\end{align}
The proof is complete.
\end{proof}

Theorem \ref{theorem:bound_loss} demonstrates that, if we hope to effectively decrease the upper bound of the generalization gap due to dataset pruning, we should focus on constraining or even directly minimizing $\epsilon$. The basic idea of the proposed optimization-based dataset pruning method exactly aims at penalizing $\epsilon$ via discrete optimization, shown in Eq.~\ref{eq:epsilon_guided} and Eq.~\ref{eq:m_guided}. 

Moreover, our empirical results in the following section successfully verify the estimated generalization gap. The estimated generalization gap of the proposed optimization-based dataset pruning approximately has the order of magnitude as $\mathcal{O} (10^{-3})$ on CIFAR-10 and CIFAR-100, which is significantly smaller the estimated generalization gap of random dataset pruning by more than one order of magnitude. 

\section{Experiment}\label{sec:exp}
In the following paragraph, we conduct experiments to verify the validity of the theoretical results and the effectiveness of the proposed dataset pruning method. In Section.~\ref{sec:imple}, we introduce all experimental details. In Section.~\ref{sec:theory_verification}, we empirically verify the validity of the Theorem.~\ref{theorem:bound_loss}. In Section.~\ref{sec:dataset_pruning} and Section.~\ref{sec:cross_gen}, we compare our method with several baseline methods on dataset pruning performance and cross-architecture generalization. Finally, in Section.~\ref{sec:eff}, we show the proposed dataset pruning method is extremely effective on improving the training efficiency.

\begin{figure}
    \centering
    \resizebox{!}{0.245\textwidth}{\includegraphics{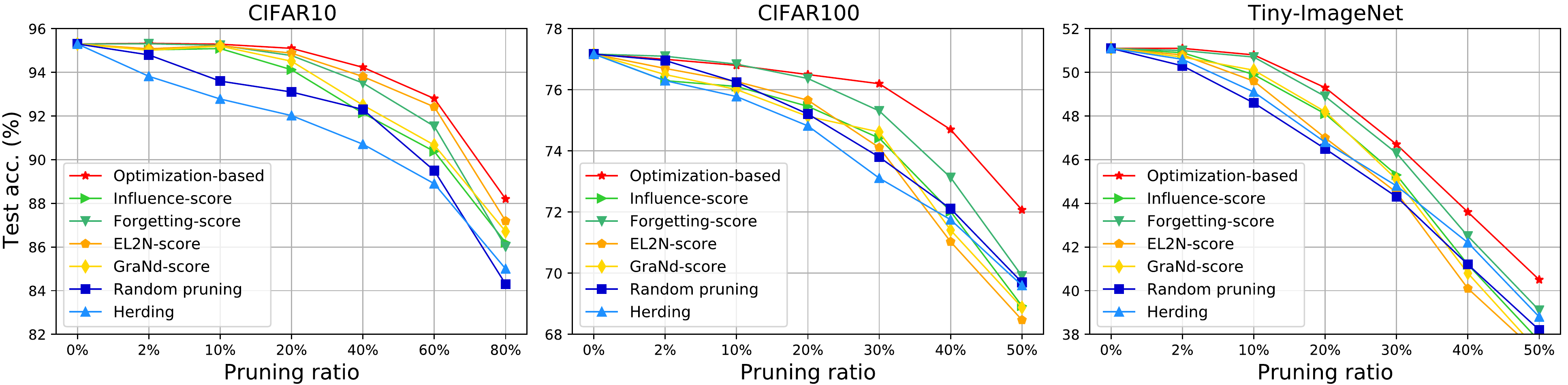}}
    \caption{We compare our proposed optimization-based dataset pruning method with several sample-selection baselines. Our optimization-based pruning method considers the 'group effect' of pruned examples and exhibits superior performance, especially when the pruning ratio is high. }
    \label{fig:pruning}
    \vspace{-3mm}
\end{figure}

\subsection{Experiment Setup and Implementation Details}\label{sec:imple}
We evaluate dataset pruning methods on CIFAR10, CIFAR100~\cite{cifar10}, and TinyImageNet~\cite{le2015tiny} datasets. CIFAR10 and CIFAR100 contains 50,000 training examples from 10 and 100 categories, respectively. TinyImageNet is a larger dataset contains 100,000 training examples from 200 categories. We verify the cross-architecture generalization of the pruned dataset on three popular deep neural networks with different complexity, \textit{e.g.,} SqueezeNet~\cite{hu2018squeeze} (1.25M parameters), ResNet18~\cite{he2016deep} (11.69M parameters), and ResNet50~\cite{he2016deep} (25.56M parameters). All hyper-parameters and experimental settings of training before and after dataset pruning were controlled to be the same. Specifically, in all experiments, we train the model for 200 epochs with a batch size of 128, a learning rate of 0.01 with cosine annealing learning rate decay strategy, SGD optimizer with the momentum of 0.9 and weight decay of 5e-4, data augmentation of random crop and random horizontal flip. In Eq.~\ref{eq:influence_func}, we calculate the Hessian matrix inverse by using seconder-order optimization trick~\cite{agarwal2016second} which can significantly boost the Hessian matrix inverse estimation time. To further improve the estimation efficiency, we only calculate parameter influence in Eq.~\ref{eq:influence_func} for the last linear layer. In Eq.~\ref{eq:epsilon_guided} and Eq.~\ref{eq:m_guided}, we solve the discrete optimization problem using simulated annealing~\cite{van1987simulated}, which is a popular heuristic algorithm for solving complex discrete optimization problem in a given time budget.

\subsection{Theoretical Analysis Verification}\label{sec:theory_verification}

\begin{wrapfigure}{r}{0.44\textwidth}
\vspace{-6mm}
  \begin{center}
    \includegraphics[width=0.43\textwidth]{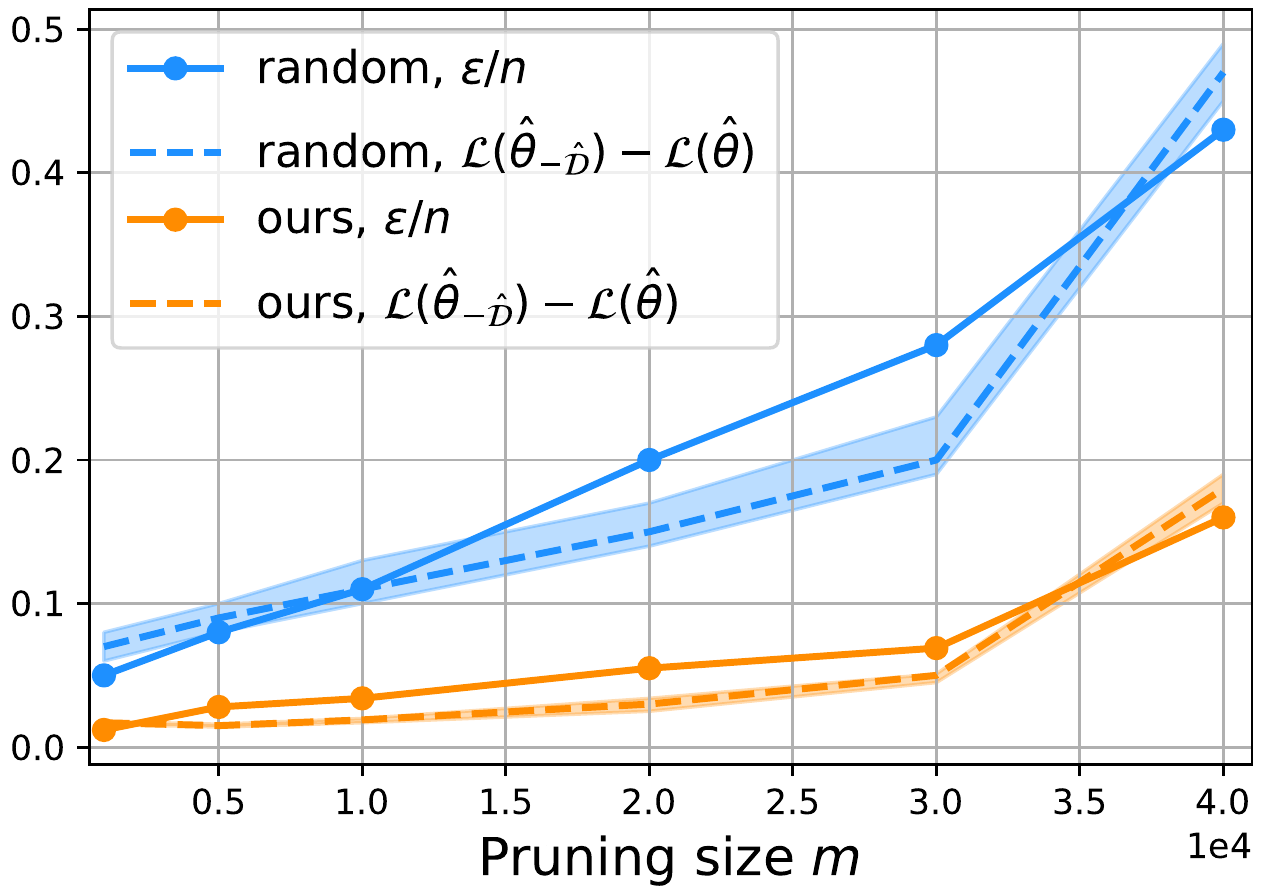}
  \end{center}
  \caption{The comparison of empirically observed generalization gap and our theoretical expectation in Theorem.~\ref{theorem:bound_loss}. We ignore the term of $m / n^{2}$ since it has much smaller magnitude with $\epsilon /n $. }
\label{fig:theory}

\end{wrapfigure}

Our proposed optimization-based dataset pruning method tries to collect the smallest subset by constraining the parameter influence $\epsilon$. In Theorem.~\ref{theorem:bound_loss}, we demonstrate that the generalization gap of dataset pruning can be upper-bounded by $\mathcal{O} ( \frac{\epsilon}{n} + \frac{m}{n^{2}} )$. The term of $\frac{m}{n^{2}}$ can be simply ignored since it usually has a much smaller magnitude than $\frac{\epsilon}{n}$. To verify the validity of the generalization guarantee of dataset pruning, we compare the empirically observed test loss gap before and after dataset pruning $\mathcal{L}(\hat{\theta}_{ - \hat{\mathcal{D}} })  - \mathcal{L}(\hat{\theta})$ and our theoretical expectation $\frac{\epsilon}{n}$ in Fig.~\ref{fig:theory}. It can be clearly observed that the actual generalization gap is highly consistent with our theoretical prediction. We can also observe that there is a strong correlation between the pruned dataset generalization and $\epsilon$, therefore Eq.~\ref{eq:epsilon_guided} effectively guarantees the generalization of dataset pruning by constraining $\epsilon$. Compared with random pruning, our proposed optimization-based dataset pruning exhibits much smaller $\epsilon$ and better generalization.

\subsection{Dataset Pruning}\label{sec:dataset_pruning}

In the previous section, we motivated the optimization-based dataset pruning method by constraining or directly minimizing the network parameter influence of a selected data subset. The theoretical result and the empirical evidence show that constraining parameter influence can effectively bound the generalization gap of the pruned dataset. In this section, we evaluate the proposed optimization-based dataset pruning method empirically. We show that the test accuracy of a network trained on the pruned dataset is comparable to the test accuracy of the network trained on the whole dataset, and is competitive with other baseline methods.

We prune CIFAR10, CIFAR100~\cite{cifar10} and TinyImageNet~\cite{le2015tiny} using a random initialized ResNet50 network~\cite{he2016deep}. We compare our proposed method with the following baselines, (a) Random pruning, which selects an expected number of training examples randomly. (b) Herding~\cite{welling2009herding}, which selects an expected number of training examples that are closest to the cluster center of each class. (c) Forgetting~\cite{toneva2019empirical}, which selects training examples that are easy to be forgotten. (d) GraNd~\cite{paul2021deep}, which selects training examples with larger loss gradient norms. (e) EL2N~\cite{paul2021deep}, which selects training examples with larger norms of the error vector that is the predicted class probabilities minus one-hot label encoding. (f) Influence-score~\cite{influencefunction}, which selects training examples with high norms of the influence function in Eq.~\ref{eq:influence_func} without considering the 'group effect'.
To make our method comparable to those cardinality-based pruning baselines, we pruned datasets using the cardinality-guaranteed dataset pruning as in Eq.~\ref{eq:m_guided}. After dataset pruning and selecting a training subset, we obtain the test accuracy by retraining a new random initialized ResNet50 network on only the pruned dataset. The retraining settings of all methods are controlled to be the same.
The experimental results are shown in Fig.~\ref{fig:pruning}. In Fig.~\ref{fig:pruning}, our method consistently surpasses all baseline methods. Among all baseline methods, the forgetting and EL2N methods achieve very close performance to ours when the pruning ratio is small, while the performance gap increases along with the increase of the pruning ratio. The influence-score, as a direct baseline of our method, also performs poorly compared to our optimization-based method. This is because all these baseline methods are \textit{score-based}, they prune training examples by removing the \textit{lowest-score} examples without considering the \textit{influence iteractions} between \textit{high-score examples} and \textit{low-score examples}. The influence of a combination of \textit{high-score examples} may be minor, and vice versa. Our method overcomes this issue by considering the influence of a subset rather than each individual example, by a designed optimization process. The consideration of the group effect make our method outperforms baselines methods, especially when the pruning ratio is high.

\subsection{Unseen Architecture Generalization}\label{sec:cross_gen}
\begin{figure}
    \centering
    \resizebox{!}{0.245\textwidth}{\includegraphics{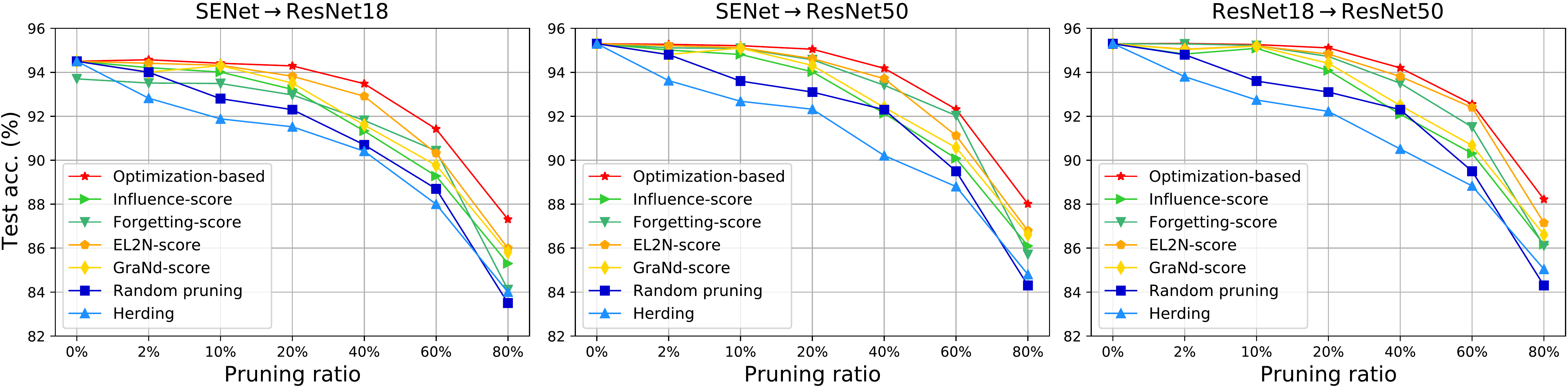}}
    \caption{To evaluate the unseen-architecture generalization of the pruned dataset, we prune the CIFAR10 dataset using a relatively small network and then train a larger network on the pruned dataset.
    We consider three networks from two families with different parameter complexity, SENet (1.25M parameters), ResNet18 (11.69M parameters), and ResNet50 (25.56M parameters). The results indicate that the dataset pruned by small networks can generalize well to large networks.}
\vspace{-3mm}
    \label{fig:cross_arch}
\end{figure}

We conduct experiments to verify the pruned dataset can generalize well to those unknown or larger network architectures that are inaccessible during dataset pruning. To this end, we use a smaller network to prune the dataset and further use the pruned dataset to train larger network architectures. As shown in Fig.~\ref{fig:cross_arch}, we evaluate the CIFAR10 pruned by SENet~\cite{hu2018squeeze} on two unknown and larger architectures, \textit{i.e.}, ResNet18~\cite{he2016deep} and ResNet50~\cite{he2016deep}. SENet contains 1.25 million parameters, ResNet18 contains 11.69 parameters, and ResNet50 contains 25.56 parameters.
The experimental results show that the pruned dataset has a good generalization on network architectures that are unknown during dataset pruning. This indicates that the pruned dataset can be used in a wide range of applications regardless of specific network architecture. In addition, the results also indicate that we can prune the dataset using a small network and the pruned dataset can generalize to larger network architectures well. These two conclusions make dataset pruning a proper technique to reduce the time consumption of neural architecture search (NAS).

\subsection{Dataset Pruning Improves the Training Efficiency. }\label{sec:eff}

The pruned dataset can significantly improve the training efficiency while maintaining the performance, as shown in Fig.~\ref{fig:time}. Therefore, the proposed dataset pruning benefits when one needs to train many trails on the same dataset. One such application is neural network search (NAS)~\cite{zoph2018learning} which aims at searching a network architecture that can achieve the best performance for a specific dataset. A potential powerful tool to accelerate the NAS is by searching architectures on the smaller pruned dataset, if the pruned dataset has the same ability of identifying the best network to the original dataset.

\begin{wrapfigure}{r}{0.4\textwidth}
\vspace{-5mm}
  \begin{center}
    \includegraphics[width=0.4\textwidth]{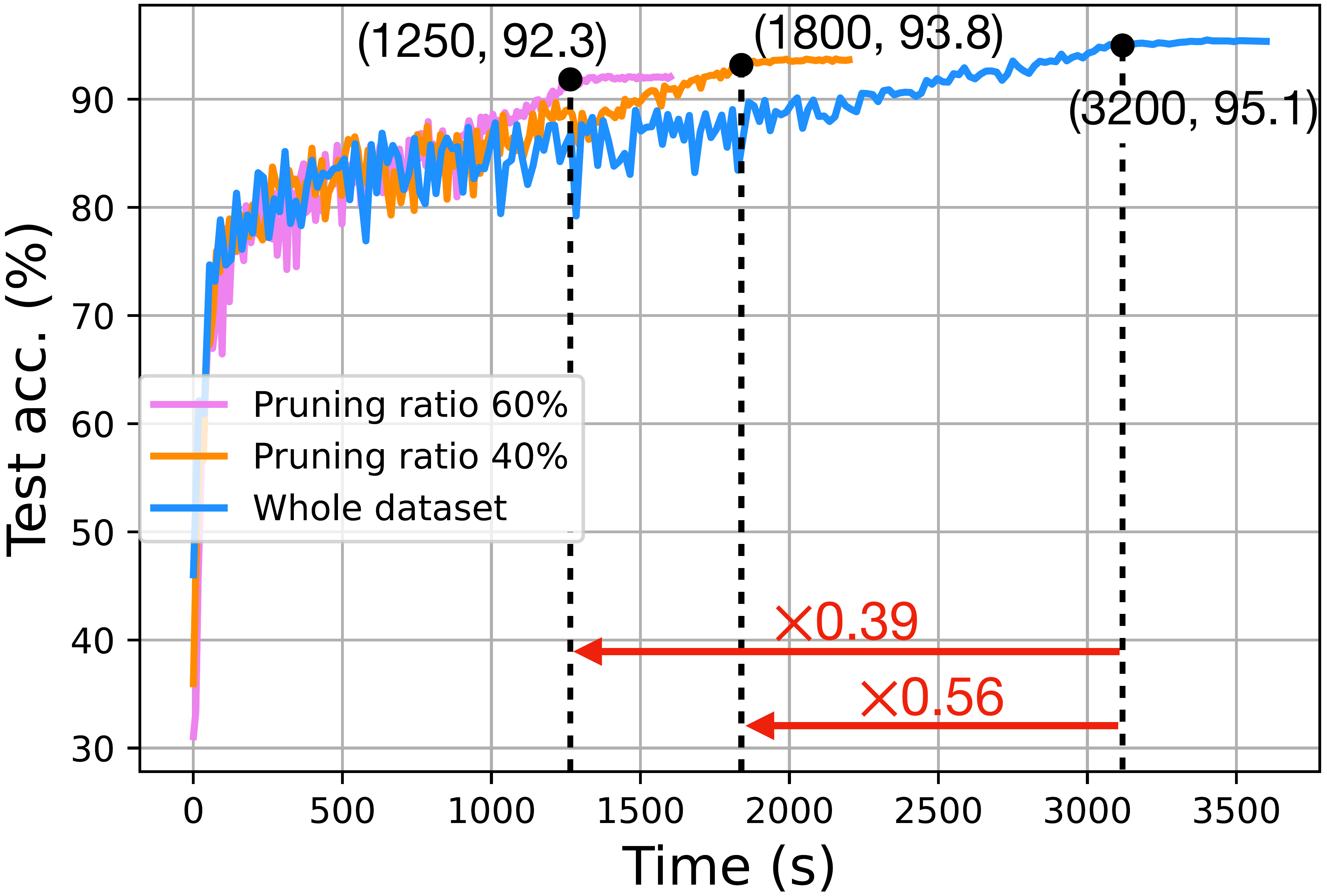}
  \end{center}
  \vspace{-3mm}
  \caption{Dataset pruning significantly improves the training efficiency with minor performance scarification. When pruning 40\% training examples, the convergence time is nearly halved with only 1.3\% test accuracy drop. The pruned dataset can be used to tune hyper-parameters and network architectures to reduce the searching time.}
  \vspace{-3mm}
\label{fig:time}
\end{wrapfigure}

We construct a 720 ConvNets searching space with different depth, width, pooling, activation and normalization layers. We train all these 720 models on the whole CIFAR10 training set and four smaller proxy datasets that are constructed by \textit{random}, \textit{herding}~\cite{welling2009herding}, \textit{forgetting}~\cite{toneva2019empirical}, and our proposed dataset pruning method. All the four proxy datasets contain only 100 images per class. We train all models with 100 epochs. The proxy dataset contains 1000 images in total, which occupies 2\% storage cost than training on the whole dataset.

Table.\ref{tab:nas} reports (a) the average test performance of the best selected architectures trained on the whole dataset, (b) the Spearmen's rank correlation coefficient between the validation accuracies obtained by training the selected top 10 models on the proxy dataset and the whole dataset, (c) time of training 720 architectures on a Tesla V100 GPU, and (d) the memory cost. Table.~\ref{tab:nas} shows that searching 720 architectures on the whole dataset raises huge timing cost. Randomly selecting 2\% dataset to perform the architecture search can decrease the searching time from 3029 minutes to 113 minutes, but the searched architecture achieves much lower performance when trained on the whole dataset, making it far from \textit{the best architecture}. Compared with baselines, our proposed dataset pruning method achieves the best performance (the closest one to that of the whole dataset) and significantly reduces the searching time (3\% of the whole dataset).
\begin{table*}[h!]

\renewcommand\arraystretch{0.9}
\centering
\vspace{-2mm}
\resizebox{0.7\textwidth}{!}{\begin{tabular}{ccccc|c}
\toprule
                    & Random            & Herding & Forgetting                & Ours       & Whole Dataset \\ \midrule
Performance (\%)    & 79.4              & 80.1  &         82.5     &     \textbf{85.7}                     & 85.9  \\
Correlation         & 0.21              & 0.23   &  0.79         &    \textbf{0.94}              & 1.00 \\
Time cost (min)     & \textbf{113}     & \textbf{113} & \textbf{113}     &  \textbf{113}     & 3029 \\ 
Storage (imgs)      & $\mathbf{10^3}$   & $\mathbf{10^3}$ & $\mathbf{10^3}$   & $\mathbf{10^3}$      & $5\times 10^4$   \\ \bottomrule
\end{tabular}}
\vspace{-2mm}
\caption{{Neural architecture search on proxy-sets and whole dataset. The search space is 720 ConvNets. We do experiments on CIFAR10 with 100 images/class proxy dataset selected by random, herding, forgetting, and our proposed optimization-based dataset pruning. The network architecture selected by our pruned dataset achieves very close performance to the upper-bound.  }}
\vspace{-5mm}
\label{tab:nas}

\end{table*}

\section{Conclusion}\label{sec:conclusion}
This paper proposes a problem of dataset prunning, which aims at removing redundant training examples with minor impact on model's performance. By theoretically examining the influence of removing a particular subset of training examples on network's parameter, this paper proposes to model the sample selection procedure as a \textit{constrained discrete optimization} problem. During the sample selection, we constrain the network parameter change while maximize the number of collected samples. The collected training examples can then be removed from the training set. The extensive theoretical and empirical studies demonstrate that the proposed optimization-based dataset pruning method is extremely effective on improving the training efficiency while maintaining the model's generalization ability.

\bibliography{iclr2023_conference}
\bibliographystyle{iclr2023_conference}

\appendix

\end{document}